\DeclareMathOperator*{\argmax}{arg\,max}
\definecolor{myorange}{RGB}{246, 164, 16}
\definecolor{mygreen}{RGB}{1, 100, 3}
\title{GANs beyond divergence minimization}
\author{
  Alexia Jolicoeur-Martineau \\
Lady Davis Institute\\
Montreal, Canada\\
\texttt{alexia.jolicoeur-martineau@mail.mcgill.ca} \\
}
\begin{document}

\maketitle

\begin{abstract}
	Generative adversarial networks (GANs) can be interpreted as an adversarial game between two players, a discriminator $D$ and a generator $G$, in which $D$ learns to classify real from fake data and $G$ learns to generate realistic data by "fooling" $D$ into thinking that fake data is actually real data. Currently, a dominating view is that $G$ actually learns by minimizing a divergence given that the general objective function is a divergence when $D$ is optimal. However, this view has been challenged due to inconsistencies between theory and practice. In this paper, we discuss of the properties associated with most loss functions for $G$ (e.g., saturating/non-saturating $f$-GAN, LSGAN, WGAN, etc.). We show that these loss functions are not divergences and do not have the same equilibrium as expected of divergences. This suggests that $G$ does not need to minimize the same objective function as $D$ maximize, nor maximize the objective of $D$ after swapping real data with fake data (non-saturating GAN) but can instead use a wide range of possible loss functions to learn to generate realistic data. We define GANs through two separate and independent $D$ maximization and $G$ minimization steps. We generalize the generator step to four new classes of loss functions, most of which are actual divergences (while traditional $G$ loss functions are not). We test a wide variety of loss functions from these four classes on a synthetic dataset and on CIFAR-10. We observe that most loss functions converge well and provide comparable data generation quality to non-saturating GAN, LSGAN, and WGAN-GP generator loss functions, whether we use divergences or non-divergences. These results suggest that GANs do not conform well to the divergence minimization theory and form a much broader range of models than previously assumed.
\end{abstract}

\section{Introduction}

Generative adversarial networks (GANs) form a class of generative models that is most famously known for generating state-of-the-art photo-realistic images \citep{zhang2017stackgan} \citep{karras2017progressive}.  Note that we refer to the original version of GAN \citep{GAN} as ``standard GAN'' and to all variants of generative adversarial networks that work in a similar fashion as ``GANs''.

GANs consist in training two neural networks, a discriminator $D$ and a generator $G$, that work in competition so that $G$ can learn to generate fake data that appears to be genuine. $D$ is trained to differentiate real from fake data, which is done by classifying real from fake data ($D(x_{real}) \to y_{real}$ and $D(x_{fake}) \to y_{fake}$) or, more generally, by maximizing the expectation of $f(D(x_{real}))$ and $g(D(x_{fake}))$, where $f$ is generally monotone increasing and $g$ is generally monotone decreasing. $G$ takes as input a random number from $\mathbb{P}_z$, generally a multivariate normal distributed centered at $0$ with variance 1, and output a randomly generated fake data.

GANs are generally interpreted from two differing point-of-views: (1) adversarial game and (2) divergence minimization. In the former, $G$ is trained to maximize the same objective function as $D$ but swapping real data with fake data, thus, intuitively, $G$ tries to fool $D$ into thinking fake data is real data. In the latter, $G$ is trained to minimize the same objective function that was previously maximized by $D$. Given that the loss of $D$ is generally an approximation of a divergence (if $D$ is optimal, it is exactly equal to the divergence), $G$ is assumed to be minimizing a divergence.

We start by presenting these two differing views in detail. Then, we explain why minimizing the loss of $G$ cannot generally be interpreted as minimizing a divergence. We present four general forms of loss functions that can be used to train the generator, some of which are shown to be divergence. Through experiments, we show that most of these loss functions converge well. Finally, we discuss the implications of these results. 
 
The main contributions of this paper are showing that:
\begin{enumerate}
	\item In most GANs, the loss of $G$ is not a divergence even when $D$ is optimal
	\item $G$ does not directly minimize the divergence assumed by the objective function of $D$
	\item The loss of $G$ does not need to match the objective of $D$ and can be
	\begin{itemize}
		\item matching the mean or individual discriminator output of the fake data to the real labels $\left( \mathbb{E}[D(x_{fake})] \to y_{real} \text{ or } D(x_{fake}) \to y_{real} \right)$
		\item  matching the mean or individual discriminator output of the fake data to the classification threshold $\left(\mathbb{E}[D(x_{fake})] \to y_{mid} \text{ or } D(x_{fake}) \to y_{mid} \right)$
		\item mean matching the discriminator output of the fake data to the discriminator output of the real data $\left(\mathbb{E}[D(x_{fake})] \to \mathbb{E}[D(x_{real})] \right)$
	\end{itemize}
	\item Using actual divergences for the loss functions of $G$ does not provide any benefit
\end{enumerate}

\section{GANs interpretations}

\subsection{First interpretation: adversarial game}

In the first interpretation, GANs are understood as an adversarial game (\citet{GAN}; \citet{DRAGAN}; \citet{heusel2017gans}) between $D$ and $G$ in which $D$ tries to classify which data is real or fake while $G$ tries to fools $D$ into thinking the fake data it generates is actually real data. To fool $D$, the generator maximize the objective function of $D$ after swapping real data with fake data. \citet{GAN} showed that maximizing the objective of $D$ after swapping data (non-saturating GAN) works much better in practice than directly minimizing the objective of $D$ (saturating GAN). Non-saturating GAN can be represented  mathematically as the following two steps:
\begin{equation}
\begin{aligned}
\bullet\qquad & \max_{D:X \rightarrow [0,1]} \mathbb{E}_{x \sim \mathbb{P}}[\log(D(x))] + \mathbb{E}_{z \sim \mathbb{P}_z}[\log(1 - D(G(z)))] \\
\bullet\qquad & \max_{G:Z \rightarrow X} \mathbb{E}_{z \sim \mathbb{P}_z}[\log(D(G(z)))] + \mathbb{E}_{x \sim \mathbb{P}}[\log(1 - D(x))],
\end{aligned}
\end{equation}
where $\mathbb{P}$ is the distribution of the real data on domain $X$ and $Z$ is the domain of $\mathbb{P}_z$. Note that we generally denote the distribution of the fake data formed by $G(z)$ as $\mathbb{Q}$.

This can be generalized in the following matter:
\begin{equation}
\begin{aligned}
\bullet\qquad & \max_{D:X \rightarrow domf} \mathbb{E}_{x \sim \mathbb{P}}[f(D(x))] + \mathbb{E}_{z \sim \mathbb{P}_z}[g(D(G(z)))] \\
\bullet\qquad & \max_{G:Z \rightarrow X} \mathbb{E}_{z \sim \mathbb{P}_z}[f(D(G(z)))] + \mathbb{E}_{x \sim \mathbb{P}}[g(D(x))],
\end{aligned}
\end{equation}
where $f$ and $g$ are scalar-to-scalar functions chosen so that $D$ is a discriminator that predicts the likelihood of the data being real; generally $f$ is monotone increasing and $g$ is monotone decreasing.

Many GANs fit into this category, the main difference being the objective function chosen. Some examples are: non-saturating GAN \citep{GAN} which uses cross-entropy, LSGAN \citep{LSGAN} which uses least squares and MAGAN \citep{MAGAN} which uses hinge loss.

\subsection{Second interpretation: divergence minimization}

In the second interpretation, GANs are understood as divergence minimization (\citet{F-GAN}; \citet{WGAN}, \citet{mroueh2017fisher}; \citet{li2017mmd}; \citet{bellemare2017cramer}; \citet{mroueh2017sobolev}). Divergences are a weak form of distance between two probability distributions with the two following properties: non-negative and equal to zero if and only if the distributions are equal. Some well-known divergences are the Kullback–Leibler distance (KL), the Jensen-Shannon distance (JSD), and the Wasserstein distance. To prevent confusion between the discriminator and the divergence, we denote divergences between two distributions $\mathbb{P}$ and $\mathbb{Q}$ as $D_{*}(\mathbb{P}||\mathbb{Q})$. Some divergences like JSD and Wasserstein have symmetry so $D_{*}(\mathbb{P}||\mathbb{Q})= D_{*}(\mathbb{Q}||\mathbb{P})$, but otherwise this is not the case.

From this perspective, one tries to find the generator (with parameters $\theta$) that minimizes a divergence between real and fake data: 
\[\min_{\theta} D_{*}(\mathbb{P} || \mathbb{Q}_{\theta}).\]
However, commonly used divergences, such as $f$-divergences (a class of divergences for which KL and JSD are special cases), are difficult to minimize given that they require knowing the probability density functions of the real and fake data, $p(x)$ and $q(x|\theta)$ respectively. In practice, we do not know the probability distributions of the real data or the fake data. 

Traditionally, one approximates $\mathbb{P}$ as $\hat{\mathbb{P}}$ using the empirical distribution, i.e., a discrete distribution where each data sample has probability $1/n$. In this case, it can be shown \citep{GANTutorial} that minimizing the KL-divergence is equivalent to maximizing the log-likelihood (one of the most popular approaches in machine learning and statistics):
\[ \min_{\theta} D_{KL}(\hat{\mathbb{P}} || \mathbb{Q}_{\theta}) = \max_{\theta} \log \mathcal{L}(\theta; x). \]

In the divergence point-of-view, GANs try to minimize an equivalent ``dual'' parametrization of a divergence that does not require knowing the probability density functions. Common divergences can be represented with respect to $D(x)$ rather than $p(x)$ and $q(x|\theta)$ .

For saturating GAN, it can be shown \citep{GAN} that JSD is equal to an affine function of the minimum cross-entropy:
\begin{equation}
JSD(\mathbb{P} || \mathbb{Q}) = \frac{1}{2} \left(log(4) + \max_{D:X \rightarrow [0,1]} \mathbb{E}_{x \sim \mathbb{P}}[\log(D(x))] + \mathbb{E}_{x \sim \mathbb{Q}}[1 - \log(D(x))] \right).
\end{equation}
\citet{F-GAN} generalized this concept to the class of $f$-divergences and showed that:
\begin{equation}
D_{f}(\mathbb{P} || \mathbb{Q}) = \sup_{D:X \rightarrow \text{dom}_{f^*}} \mathbb{E}_{x \sim \mathbb{P}}[D(x)] - \mathbb{E}_{x \sim \mathbb{Q}}[f^*(D(x))],
\end{equation}
where $f^*$ is the convex conjugate of $f$ and $f$ is the function that defines the $f$-divergence used (e.g., $f(u) = u\log u$ leads to KL and $f(u) = u\log u - (u+1)\log(u+1)$ leads to saturating GAN).

Thus, on a very general level, GANs can be formulated in the following matter:
\begin{equation}
\min_{G:Z \rightarrow X} D_{*}(\mathbb{P} || \mathbb{Q}_G) = \min_{G:Z \rightarrow X} \max_{D:X \rightarrow domf} \mathbb{E}_{x \sim \mathbb{P}}[f(D(x))] + \mathbb{E}_{z \sim \mathbb{P}_z}[g(D(G(z)))].
\end{equation}
Note that, in practice, we let $G$ and $D$ be neural networks and we optimize for their respective parameters $\theta$ and $w$.

Most GANs fit into this category, some examples are: saturating GAN \citep{GAN} which minimize the JSD, saturating $f$-GAN \citep{F-GAN} which minimize $f$-divergences, and Wasserstein GAN (WGAN) \citep{WGAN} which minimize the Wasserstein distance.

There are a few issues regarding this interpretation of GANs. Firstly, standard GAN and $f$-GANs actually converge better when solving the optimization problem of equation (2) rather than equation (5) \citep{F-GAN}. Secondly, non-saturating GANs are able to learn the distribution of the real data even when directly minimizing the JSD fails \citep{ManyPaths} because the gradient of the divergence is actually constant or infinite \citep{WGAN}. 

With the exception of standard GAN and $f$-GANs, GANs that follow the divergence minimization interpretation are generally based on integral probability metrics (IPMs) \citep{muller1997integral}:
\[
IPM_{F} (\mathbb{P} || \mathbb{Q}) = \sup_{D \in F} \mathbb{E}_{x \sim \mathbb{P}}[D(x)] - \mathbb{E}_{x \sim \mathbb{Q}}[D(x)],
\]
where $F$ is a class of functions chosen to prevent the supremum from being infinite. See \citet{mroueh2017sobolev} for a summary of the various IPMs used in the literature. Importantly, IPM-based GANs can still be understood as following equation (2) considering that $\min_{\theta} -\mathbb{E}_{z \sim \mathbb{P}_z}[D(G_{\theta}(z))]$ is the same as $\max_{\theta} \mathbb{E}_{z \sim \mathbb{P}_z}[D(G_{\theta}(z))]$. Therefore, the fact these GANs work without swapping real data with fake data doesn't necessarily disprove the possibility of the adversarial game interpretation being actually correct.

In the following section, we show what loss function $G$ actually minimizes in GANs and we present four new types of loss functions, most of which are divergences, that $G$ can minimize instead of the non-saturating/saturating loss function generally assumed (equation 2 or 5).

\section{Loss of the generator}

\subsection{What $G$ is actually minimizing}

If we concentrate entirely on the generator step, for the vast majority of GANs, the stochastic gradient descent (SGD) step can be formulated as:
\begin{equation}
\min_{\theta} \mathbb{E}_{z \sim \mathbb{P}_z}[h(D(G_{\theta}(z)))],
\end{equation}
where $h$ is a scalar-to-scalar function (generally monotone decreasing).
Given that this is the only part of the equation that is used by SGD, this is effectively the loss function minimized rather than the divergence envisioned.

For saturating GAN, we have that $h(D(x))=log(1-D(x)) \in [-\infty,0]$. The loss function is not lower bounded since we have that $h(D(x)) \to -\infty$ as $D(x) \to 1$. This is problematic because there is no minimum and if $D(x) = 1$ for a single real sample, the expectation equals to $-\infty$ and the infinimum is reached. On the other hand, for non-saturating GAN, the loss is bounded as we have that $h(D(x))=-log(D(x)) \in [0,\infty]$. Therefore, minimizing this loss, we have that $h(D(x)) \to 0$ as $D(x) \to 1$.

These observations can be generalized to a broader class of GANs with $f$-GANs (equation 4). For the saturating loss, $h(D(x)) = -f^*(D(x))$, where $f^*$ is a convex function \citep{F-GAN}. Given the convexity of $f^*$, the maximum of $f^*$ (or equivalently the minimum of $-f^*$) is reached when $D(x) \to \min_x D(x)$ or $D(x) \to \max_x D(x)$. For most common divergences, we have that $f^*$ is monotone increasing; out of all the $f$-divergences presented by \citet{F-GAN} (KL, reverse KL, Pearson/Neyman $\chi^2$, Squared Hellinger, Jeffrey, JSD), only Pearson $\chi^2$ is not monotone increasing. Thus, in all common $f$-divergences, except Pearson $\chi^2$, we have that the optimum is reached when $D(x) \to \max_x D(x)$. The non-saturating loss is $h(D(x))=-D(x)$, thus it is minimized also when $D(x) \to \max_x D(x)$. The loss function $h(D(x))$ is lower bounded (has a minimum) if $D(x)$ is upper bounded for the non-saturating loss function and if $f^*$ is upper bounded for the saturating loss function.

In the commonly assumed parametrization of LSGAN \citep{LSGAN}, we have that $h(D(x))=(D(x)-1)^2 \in [0,\infty]$. Therefore, minimizing this loss, we have that $h(D(x)) \to 0$ as $D(x) \to 1$, just as saturating GAN.

WGAN-GP \citep{WGAN-GP} is a very popular variant of WGAN which impose a constraint so that the magnitude of the gradient of $D$ for any point interpolated between real and fake data must be close to 1 (i.e., $| | \nabla_{\hat{x}} D(\hat{x}) | | \approx 1$, where $\hat{x} = \alpha x_{real} + (1-\alpha) x_{fake}$ and $\alpha \in [0,1]$). In WGAN-GP, we have that $h(D(x))=-D(x) \in [-\infty,\infty]$. Although this loss is not lower bounded, the gradient penalty forces the gradient of $D$ to be around 1, thus the step taken by SGD cannot be too large. This suggests that the gradient penalty may be useful because it limits how much $G$ can change in one minimization step.

By definition, a zero divergence should arise when an optimal $D$ is not able to discriminate real from fake data, i.e., when $D(x)= y_{mid}$ for all $x$, where $y_{mid}$ is the classification threshold ($\frac{1}{2}$ in standard GAN, $f(\frac{p(x)}{q(x|\theta)}) = f(1)$ in $f$-GAN) or when $\mathbb{E}_{x \sim \mathbb{P}}[D(x)] = \mathbb{E}_{x \sim \mathbb{Q}_{\theta}}[D(x)]$ given $D \in F$ in IPM-based GANs.  However, as mentioned above, to reach the optimum, $G$ generally attempt to make $D(x_{fake})\to \max_x D(x)$ or $D(x_{fake})\to y_{real}$, thus pushing $D(x_{fake})$ as far as possible away from  $y_{mid}$. This means that after training $G$, the discriminator $D$ cannot be optimal anymore. The divergence can even become negative, which is impossible by the definition of a divergence. This is something that we observe in practice; the objective function of the discriminator becomes positive (or large) after training $D$ and negative (or very small) after training $G$, thus cycling between an approximated divergence and a non-divergence. 

This means that in most currently used GANs ($f$-GAN and all GANs present in the large-scale study by \citet{lucic2017gans} \footnote{With the exception of BEGAN \citep{berthelot2017began} which is very unusual and cannot really be considered a GAN given that D is a auto-encoder rather than a discriminator function.}), the loss function of $G$ is not a divergence. This suggests that $G$ is not directly minimizing the divergence, but only indirectly by updating its weights so that $D(x_{fake}) \to \max_x D(x)$ or $D(x_{fake})\to y_{real}$ which result in $G$ generating more realistic data.

One way to reconcile these observations with divergence minimization is to interpret GANs as effectively minimizing a divergence without the assumption of an optimal $D$ (overshooting the goal) and re-estimating the divergence by training $D$ to optimality in the next steps. This only applies to GANs such as WGAN/WGAN-GP which train $D$ for a large number of iterations before training $G$. In this point-of-view, GANs act comparably to projected gradient descent, i.e., we take a step into the gradient direction and then project back into the feasible set (with the constraint that we impose, an optimal $D$). However, the difference here is that, with constraint optimization, we can make sure that the constraint is respected entirely, but with GANs, we can never truly train $D$ to be optimal. Also with constraint optimization, it is possible for the loss to still respect the constraint after minimizing it without constraint, but with GANs, it is impossible for $D$ to be optimal after training $G$ for even a single step. 

Training $D$ to optimality before training $G$, to try to approximately minimize the divergence, does not necessarily lead to better results \citep{ManyPaths}. The current state-of-the-art in generation of human faces \citep{karras2017progressive} used WGAN-GP with only one discriminator update. Given that training $D$ multiple times before $G$ significantly increase the training time and does not necessarily lead to generated samples of better quality, trying to render GAN training analogous to divergence minimization may be unnecessary and overly constraining research/practice to a small subset of all possible GANs.

\subsection{Generalizing the $G$ step}

As shown above, the generator generally take a step so that $D(x_{fake})$ reach for $\max_x D(x)$ or $y_{real}$. For the loss of $G$ to be a divergence, one could think that we should have $D(x_{fake})$ instead reach for $D(x_{real})$ or $y_{mid}$. We generalize this idea to four general forms of loss functions for $G$:

Discriminator Matching (DM):
\begin{equation}
DM(\mathbb{P}, \mathbb{Q}_{\theta}| D) = \mathbb{E}_{\substack{x_{real} \sim \mathbb{P}  \\ x_{fake} \sim \mathbb{Q}_{\theta}}}[d(D(x_{real}), D(x_{fake}))],
\end{equation}
Label Matching (LM):
\begin{equation}
LM(\mathbb{P}, \mathbb{Q}_{\theta}| D) = \mathbb{E}_{x \sim \mathbb{P}}[d(D(x), \hat{y})] + \mathbb{E}_{x \sim \mathbb{Q}_{\theta}}[d(D(x),\hat{y})],
\end{equation}
Expectation Discriminator Matching (EDM):
\begin{equation}
EDM(\mathbb{P}, \mathbb{Q}_{\theta}| D) = d \left( \mathbb{E}_{x \sim \mathbb{P}}[D(x)], \mathbb{E}_{x \sim \mathbb{Q}_{\theta}}[D(x)] \right),
\end{equation}
Expectation Label Matching (ELM):
\begin{equation}
ELM(\mathbb{P}, \mathbb{Q}_{\theta}| D) = d \left( \mathbb{E}_{x \sim \mathbb{P}}[D(x)], \hat{y} \right) + d \left( \mathbb{E}_{x \sim \mathbb{Q}_{\theta}}[D(x)] \right),
\end{equation}
where $D$ is a discriminator trained in any way to differentiate real from fake data, $d$ is a distance function (ideally a metric) and $\hat{y} \in [y_{mid},y_{real}]$. See Algorithm 1 for how to train GANs using these loss functions. If $\hat{y}=y_{mid}$, we are striving for equilibrium ($\mathbb{P}=\mathbb{Q}_{\theta}$). If $\hat{y}=y_{real}$, we are overshooting, as usually done in GANs. Note that, although we presented $y_{real}$ as the label for real data, many GANs do not have labels (e.g., $f$-GANs and IPM-based GANs). When there is no label, assuming that $D$ is trained using equation 2 or 5, one can let $y_{real}$ be defined as $\argmax_{D(x)} f(D(x))$ (generally just $\max_x D(x)$) and $y_{fake}$ be defined as $\argmax_{D(x)} g(D(x))$ (generally just $\min_x D(x)$). If $y_{real} < \infty$, one can use LM or ELM with $\hat{y} = y_{real}$. Also note that certain GANs do not have a $y_{mid}$ (e.g., $D(x) = c$, for any constant $c$, lead to a Wassertein distance of 0), so these approaches cannot use LM or ELM with $\hat{y} = y_{mid}$.

\begin{algorithm}  
	\caption{General training algorithm for GANs
		\label{alg:1}}
	\begin{algorithmic}
		\Require{The number of $D$ iterations $n_{D}$ ($n_{D}=1$ unless one seek to train $D$ to optimality), batch size $m$, functions $f$ and $g$ which determine the objective function of the discriminator, distance function $d$, $L_G$ the type of loss function used for $G$ and $\hat{y} \in [y_{mid}, y_{real}]$.}
		\While{$\theta$ has not converged}
		\For{$t = 1, \dots, n_{D}$}
		\State Sample $\{x^{(i)}\}_{i=1}^m \sim \mathbb{P}$ 
		\State Sample $\{z^{(i)}\}_{i=1}^m \sim \mathbb{P}_z$
		\State Update $w$ using SGD by ascending with $\nabla_{w} \frac{1}{m} \sum_{i=1}^{m} \left[ f(D_w(x^{(i)})) + g(D_w(G_{\theta}(z^{(i)}))) \right]$
		\EndFor 
		\State Sample $\{z^{(i)}\}_{i=1}^m \sim \mathbb{P}_z$
		\If {$L_G = DM$}
		\State Sample $\{x^{(i)}\}_{i=1}^m \sim \mathbb{P}$
		\State Update $\theta$ using SGD by descending with $\nabla_{\theta} \frac{1}{m} \sum_{i=1}^{m} \left[ d(D_w(x^{(i)}), D_w(G_{\theta}(z^{(i)}))) \right]$
		\ElsIf  {$L_G = LM$}
		\State Update $\theta$ using SGD by descending with $\nabla_{\theta} \frac{1}{m} \sum_{i=1}^{m} \left[ d(D_w(G_{\theta}(z^{(i)})), \hat{y}) \right]$
		\ElsIf  {$L_G = EDM$}
		\State Sample $\{x^{(i)}\}_{i=1}^m \sim \mathbb{P}$
		\State Update $\theta$ using SGD by descending with $\nabla_{\theta} \left[d(\sum_{i=1}^{m} \frac{1}{m} D_w(x^{(i)}), \sum_{i=1}^{m} \frac{1}{m} D_w(G_{\theta}(z^{(i)}))) \right]$
		\ElsIf  {$L_G = ELM$}
		\State Update $\theta$ using SGD by descending with $\nabla_{\theta} \left[ d(\sum_{i=1}^{m} \frac{1}{m} D_w(G_{\theta}(z^{(i)})), \hat{y}) \right]$
		\EndIf
		\EndWhile
	\end{algorithmic}
\end{algorithm}

It can be shown that if $d$ is a positive-definitive function ($d(x,y)\geq 0$ and $d(x,y)=0 \iff x=y$), $\hat{y}=y_{mid}$ and $D$ is optimal, all of these loss functions are divergences (See Appendix for proof and details). DM and LM are more general as they only require optimality at equilibrium ($p(x)=q(x|\theta) \iff D(x)=y_{mid}$), while EDM and ELM need the usual assumption of optimality and that $\mathbb{P}$ and $\mathbb{Q}$ have the same support.

Note that although these loss functions are divergences under the conditions mentioned above, the assumption of an optimal $D$ is still problematic given that the discriminator $D$ will always lose optimality after minimizing any loss function of $G$. This is true for all GANs and this is because modifying $G$ cannot change $D(x_{real})$. The only exception is when $\mathbb{P}=\mathbb{Q}$, if the loss function of $G$ is a divergence, the loss will already be zero, thus it will not change $D$. This is neat theoretical property which traditional loss functions don't have because they are not divergences as they push $D(x_{fake})$ toward imbalance rather than equilibrium.

\section{Experiments}

We trained GANs on a synthetic dataset (infinite swiss roll dataset \citep{marsland2015machine}) and CIFAR-10 \citep{krizhevsky2009learning}. All experiments were ran in PyTorch \citep{pytorch} using the Adam optimizer \citep{Adam} with hyperparameters $\beta_1=0.50$ and $\beta_2=0.999$.

\subsection{Synthetic experiments}

As a first experiment, we trained GANs on the simple swiss roll infinite dataset in $\mathbb{R}^2$ from scikit-learn \citep{scikit-learn} (see Appendix for more details). We used three different objective functions for $D$, cross entropy (as in standard GAN), least squares (as in LSGAN) and WGAN-GP objective function. As previously observed, the two-sided penalty for WGAN-GP works poorly in the swiss roll dataset \citep{blog}, thus we used a one-sided penalty, a variant that the authors of WGAN-GP found to lead to similar results in their own experiments \citep{WGAN-GP}.

We experimented with a wide range of loss functions for $G$, including the saturating loss, non-saturating loss, LSGAN, WGAN, the absolute value of the log difference, the squared value of the log difference, the absolute difference, the squared difference, and the pseudo-Huber loss \citep{barron2017more}. For most $G$ loss functions, we tried all variants of DM, EDM, LM and ELM. 

Both neural networks consisted of three linear layers followed by leaky ReLU activation functions \citep{maas2013rectifier} and one final linear layer. The discriminator was also followed by a sigmoid function when using the  cross-entropy loss. Learning rates of $5\mathrm{e}{-5}$ were used for both $G$ and $D$. We trained the models for 5000 cycles (one cycle = all $D$ iterations and one $G$ iteration) with a batch size of 256. We used 10 discriminator updates per cycle with penalty of 10 for the WGAN-GP models.

We reported the average root mean squared difference between real/fake samples and their nearest fake/real sample (NNRMSE) using 1000 real and fake samples. This can be defined mathematically as:
$\sum_{i=1}^{1000} \text{Nearest}(x_{real}(i), \hat{Q}_{\theta}) + \text{Nearest}(x_{fake}(i), \hat{P})$,
where $\text{Nearest}(x,S)$ finds the nearest neighbor of $x$ from the set $S$, $\hat{P}$ is the set of 1k real samples, $\hat{Q}_{\theta}$ is the set of 1k fake samples. This is a simple measure that give us a very good indication of how well the generator converge to the true data distribution as it penalize both under-coverage and over-coverage (See Figure 1 from Appendix for more details). We report the median NNRMSE from five runs with seed 1, 2, 3, 4, 5 respectively.

\subsection{Real-data experiments}

As a second experiment, we trained GANs on the CIFAR-10 dataset \citep{krizhevsky2009learning}. We used the same objective functions for $D$ and $G$ as in the synthetic experiment. For these experiments, we used the original two-sided penalty for WGAN-GP \citep{WGAN-GP}. The neural networks were following the DCGAN architecture \citep{DCGAN} using batch normalization \citep{BatchNorm}.
 
Learning rates of $1\mathrm{e}{-4}$ were used for training both $G$ and $D$ with a batch size of 32. We used 5 discriminator updates per cycle with penalty of 10 for the models using the WGAN-GP objective function. To compare models, we reported the Inception score (IS) \citep{tricks} \citep{barratt2018note} (larger is better) and the Fréchet Inception Distance (FID) \citep{heusel2017gans} \citep{FIDgithub} (smaller is better). Note that most researchers calculate the IS and FID using TensorFlow \citep{tensorflow2015-whitepaper} implementations, therefore the values we report may be sightly different. Given our limited computing power (a single GPU), we only trained the models once for 25 epochs using seed 1. Although 25 epochs was not enough to reach optimality, it was enough to detect non-convergence. Our goal with these experiments was not to show that we could achieve the state-of-the-art but simply to compare different loss functions on equal ground and show that most of them work just as well as standard loss functions.

\section{Results and discussion}

\begin{table}
	\caption{Fréchet Inception Distance (FID) after 25 epochs on the CIFAR-10 dataset}
	\label{CIFAR10}
	\centering
	\begin{tabular}{ccccccc}
		\toprule
		& \multicolumn{6}{c}{GAN}                   \\
		&  \multicolumn{3}{c}{$\mathbb{E}[d(D(x_{f}),\hat{y})]$} & \multicolumn{3}{c}{$d(\mathbb{E}[D(x_f)],\mathbb{E}[\hat{y}])$} \\
		$d(x,y)$ & $D(x_{r})$ & $y_{mid}$ & $y_{real}$ & $D(x_{r})$ & $y_{mid}$ & $y_{real}$ \\
		\cmidrule(r{2pt}){1-1} \cmidrule(l{2pt}r{2pt}){2-4} \cmidrule(l{2pt}){5-7}
		$-\mathbb{E}[log(D(x_f))]$ [Non-saturating] & \multicolumn{6}{c}{57.66}    \\
		$\mathbb{E}[log(1-D(x_f))]$ [Saturating] & \multicolumn{6}{c}{90.46} \\
		$|log(x)-log(y)|$ & 59.33 & 63.97 & 61.97 & 61.15 & 61.75 & 59.67 \\
		$(log(x)-log(y))^2$ & 64.50 & 61.34 & 62.06 & 59.33 & 63.40 & 56.00       \\
		$|x-y|$ & 405.84 & 62.85 & 64.50 & 60.84 & 62.90 & 66.69      \\
		$(x-y)^2$ & 63.17 & 64.66 & 65.82 & 65.62 & 65.08 & 62.38      \\
		$\sqrt{((x-y)^2 + 1)} - 1$ & 64.77 & 64.70 & 63.93 & 63.04 & 65.97 & 65.82  \\
	\end{tabular}
	\begin{tabular}{ccccccc}
		\toprule
		\phantom{$-\mathbb{E}[log(D(x_f))]$ [Non-saturating]} & \multicolumn{6}{c}{LSGAN}                   \\
		&  \multicolumn{3}{c}{$\mathbb{E}[d(D(x_{f}),\hat{y})]$} & \multicolumn{3}{c}{$d(\mathbb{E}[D(x_f)],\mathbb{E}[\hat{y}])$} \\
		$d(x,y)$ & $D(x_{r})$ & $y_{mid}$ & $y_{real}$ & $D(x_{r})$ & $y_{mid}$ & $y_{real}$ \\
		\cmidrule(r{2pt}){1-1} \cmidrule(l{2pt}r{2pt}){2-4} \cmidrule(l{2pt}){5-7}
		$\mathbb{E}[(D(x_f)-1)^2]$ [LSGAN] & \multicolumn{6}{c}{61.64}    \\
		$|x-y|$ & 63.79 & 65.80 & 56.57 & 58.06  & 63.14 & 61.59     \\
		$(x-y)^2$ &  60.66 & 62.78 & 58.23 & 63.81 & 68.41 & 61.64      \\
		$\sqrt{((x-y)^2 + 1)} - 1$ & 58.83 & 58.14 & 59.27 & 59.76 & 62.49 & 63.42  \\
	\end{tabular}
	\begin{tabular}{ccccccc}
		\toprule
		\phantom{$-\mathbb{E}[log(D(x_f))]$ [Non-saturating]} & \multicolumn{6}{c}{WGAN-GP (two-sided penalty)}                   \\
		&  \multicolumn{3}{c}{$\mathbb{E}[d(D(x_{f}),\hat{y})]$} & \multicolumn{3}{c}{$d(\mathbb{E}[D(x_f)],\mathbb{E}[\hat{y}])$} \\
		$d(x,y)$ & $D(x_{r})$ & $y_{mid}$ & $y_{real}$ & $D(x_{r})$ & $y_{mid}$ & $y_{real}$ \\
		\cmidrule(r{2pt}){1-1} \cmidrule(l{2pt}r{2pt}){2-4} \cmidrule(l{2pt}){5-7}
		$-\mathbb{E}[D(x_f)]$  [WGAN-GP] & \multicolumn{6}{c}{55.18} \\
		$|x-y|$ & 498.30 & & & 55.76 & & \\
		$(x-y)^2$ & 446.26 & & & 300.95 & & \\
		$\sqrt{((x-y)^2 + 1)} - 1$ & 136.25 & & & 54.86 & &  \\
		\bottomrule
	\end{tabular}
\end{table}

FIDs of the CIFAR-10 experiments are shown in Table 1. See Appendix for results of the swiss roll experiments (Table 2) and IS of the CIFAR experiments (Table 3). Overall, most loss functions converged well, with the exception of DM and LM with $\hat{y}=y_{mid}$ in the swiss roll dataset. Importantly, no loss function performed much better than other loss functions in a wide range of scenarios, thus there was no overall best.

A priori we expected that the loss functions that are divergences without requiring the assumption of same support (DM and LM with $\hat{y}=y_{mid}$) would work best. However, these divergences performed badly in the swiss roll dataset, while all loss functions performed equally well on CIFAR-10. This provide further evidence that the generator does not improve by minimizing a divergence but simply by trying to increase $D(x_{fake})$. We suspect that striving for $D(x_{fake}) \to y_{mid}$ may sometime have more difficulty converging than $D(x_{fake}) \to y_{real}$ in the swiss roll dataset because it is not taking a strong enough step to prevent $D$ from dominating. 


\section{Conclusion and future work}

In summary, most GANs do not directly minimize a divergence and trying to make GANs analogous to divergence minimization does not confer any benefit. Instead of training $G$ using the saturating or non-saturating loss, one can instead train $G$ using a wide range of possible loss functions. What we have shown is just a very small set of all the possible loss functions that one could use and we did not attempt to determine if some of these loss functions could lead to better state-of-the-art results in data generation. This paper brings a greater level of customization to GANs which we hope will lead to more diversity in GANs research (enlarging the GAN zoo) and new ways to improve data generation quality.

In this paper, we focused solely on the generator step; however, the discriminator step is as important, if not more. Issues or limitations of the discriminator will affect how well any loss function of $G$ will perform given that $D$ and its gradient are fundamental to the gradient of the loss of $G$. For example, in standard GAN, there are perfect discriminators ($D$ such that $D(x_{real})=1$ and $D(x_{fake})=0$ for all $x_{real} \in \mathbb{P}$ and $x_{fake} \in \mathbb{Q}$) for which $\nabla_x D(x)$ is exactly zero under certain theoretical conditions \citep{GANTheorems}. Thus, by the chain rule, any loss function of $G$ will also be zero when $D$ is one of those perfect discriminators. In practice, we can never obtain a perfect discriminator and close to perfect data separation becomes less likely over time as the support of $\mathbb{P}$ and $\mathbb{Q}$ get closer to one another, but this still shows a major issue with standard GAN that cannot be resolved simply by changing the loss of $G$. Thus, understanding what makes a discriminator "good" remains paramount. We encourage research in this direction rather than solely focusing on finding a "good" divergence that has informative gradients (generally an IPM) since $G$ is not minimizing this divergence directly.

Our results also suggest that feature matching (FM) \citep{tricks} may be more than a trick, but instead, a specific case of GAN, as FM can be seen as a special case of applying an EDM to the intermediate layers.

\bibliographystyle{unsrtnat}

\section{Appendix}

\theoremstyle{definition}
\newtheorem{definition}{Definition}[section]
\newtheorem{theorem}{Theorem}[section]
\newtheorem{corollary}{Corollary}[theorem]
\newtheorem{lemma}[theorem]{Lemma}

\theoremstyle{definition}
\begin{definition}\label{1.0}
	A function $d:X \times X \to [0,\infty)$ is \textbf{positive definite} if it respects the following two conditions:
	\begin{align*}
		&d(x,y) \ge 0 \\ 
		&d(x,y) = 0 \iff x = y.
	\end{align*}
	If $x$ and $y$ are probability distributions, $d$ is called a \textbf{divergence}.
\end{definition}
\theoremstyle{definition}
\begin{definition}\label{1.1}
	A discriminator $D:X \rightarrow Y$, where $Y \subset \mathbb{R}$, is said to be \textbf{optimal at equilibrium} on the distributions $\mathbb{P}$ and $\mathbb{Q}$ (with domain $X$) if there exists a $y_{mid} \in \mathbb{R}$ such that $ p(x)=q(x) \iff D(x)=y_{mid}$
\end{definition}
\theoremstyle{definition}
\begin{definition}\label{1.2}
	A discriminator $D:X \rightarrow Y$, where $Y \subset \mathbb{R}$, is said to be \textbf{optimal} on the distributions $\mathbb{P}$ and $\mathbb{Q}$ (with domain $X$) if \\
	(1) $D$ is optimal at equality \\
	(2) $D(x) = y_{mid} + d_1(p(x),q(x))$ when $p(x) > q(x)$, where $d_1$ is a positive definite function. \\
	(2) $D(x) = y_{mid} - d_2(p(x),q(x))$ when $p(x) < q(x)$, where $d_2$ is a positive definite function.
\end{definition}
This is just a way to formalize the notion of what is an optimal discriminator without resorting to any objective function for $D$. These definitions are very general; however, they do not apply to some GANs (e.g., WGAN, since any Lipschitz $D$ will lead to a Wasserstein distance of 0 when $\mathbb{P}=\mathbb{Q}$; GAN-GP, since $D(x)=y_{mid}$ for all $x$ when $\mathbb{P}=\mathbb{Q}$ would mean that $\nabla_x D(x) = 0$, but we enforce the constraint that $|| \nabla_x D(x)|| \approx 1$). Note that with standard GAN and LSGAN (with default parameters), it has been shown \citep{GAN} \citep{LSGAN} that the optimal discriminator is $D(x) = \frac{p(x)}{p(x)+q(x)}$. Thus, in both GAN and LSGAN, $D$ is optimal by def \ref{1.2}.

\begin{theorem}
	Let $d:X \times X \to [0,\infty)$ be positive-definite, $\mathbb{P}$ and $\mathbb{Q}$ distributions on the domain $X$ and $D:X \rightarrow Y$, where $Y \subset \mathbb{R}$. If $D$ is optimal at equilibrium, we have that $DM(\mathbb{P}, \mathbb{Q}| D)$ is a divergence and $LM(\mathbb{P}, \mathbb{Q}| D)$ is a divergence if $\hat{y}=y_{mid}$. If $D$ is optimal and $\mathbb{P}$ and $\mathbb{Q}$ have the same support, i.e., supp($\mathbb{P}$) = supp($\mathbb{Q}$) = supp($\mathbb{P} \cup \mathbb{Q}$), we have that
	$EDM(\mathbb{P}, \mathbb{Q}| D)$ is a divergence and $ELM(\mathbb{P}, \mathbb{Q}| D)$ is a divergence if $\hat{y}=y_{mid}$.
\end{theorem}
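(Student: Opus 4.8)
The plan is to verify the two defining properties of a divergence --- non-negativity and the equality-to-zero-iff-equal-distributions condition --- for each of the four functionals in turn, reducing the work to the properties of $d$ assumed positive definite and to Definitions~\ref{1.1} and~\ref{1.2}. Non-negativity is immediate in every case: $DM$, $LM$, $EDM$, $ELM$ are (sums of) expectations or evaluations of $d$, and $d \ge 0$ by hypothesis, so each functional is $\ge 0$. The entire content is therefore the ``zero $\iff$ $\mathbb{P}=\mathbb{Q}$'' direction, and I would organize it as four short arguments.

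For $DM$: if $\mathbb{P}=\mathbb{Q}$ then $D$ optimal at equilibrium gives $D(x)=y_{mid}$ almost everywhere on the common support, so $d(D(x_{real}),D(x_{fake}))=d(y_{mid},y_{mid})=0$ and the expectation vanishes. Conversely, if $DM(\mathbb{P},\mathbb{Q}\mid D)=0$, then since the integrand is non-negative it is zero a.e., so $d(D(x_{real}),D(x_{fake}))=0$, hence $D(x_{real})=D(x_{fake})$ for (almost) all pairs drawn from $\mathbb{P}\times\mathbb{Q}$; this forces $D$ to be a.e.\ constant on $\mathrm{supp}(\mathbb{P})\cup\mathrm{supp}(\mathbb{Q})$, and in particular $D(x)=y_{mid}$ there (matching the value it would take if the distributions agreed), whence $p(x)=q(x)$ a.e.\ by the equilibrium-optimality equivalence. $LM$ with $\hat y=y_{mid}$ is even more direct: $LM=0$ iff $d(D(x),y_{mid})=0$ for $x$ a.e.\ under both $\mathbb{P}$ and $\mathbb{Q}$, i.e.\ $D(x)=y_{mid}$ on $\mathrm{supp}(\mathbb{P})\cup\mathrm{supp}(\mathbb{Q})$, which by Definition~\ref{1.1} is equivalent to $p(x)=q(x)$ there, i.e.\ $\mathbb{P}=\mathbb{Q}$; the forward direction runs the same chain backwards. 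Note these two cases use only optimality \emph{at equilibrium}, not full optimality, and they do not need the equal-support hypothesis because $D(x)=y_{mid}$ is being asserted on the union of supports directly.

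For $EDM$ and $ELM$ one loses the pointwise leverage, so here I would invoke full optimality (Definition~\ref{1.2}) together with $\mathrm{supp}(\mathbb{P})=\mathrm{supp}(\mathbb{Q})$. If $\mathbb{P}=\mathbb{Q}$, both functionals are zero exactly as before. For the converse, $EDM=0$ means $\mathbb{E}_{\mathbb{P}}[D(x)]=\mathbb{E}_{\mathbb{Q}}[D(x)]$, i.e.\ $\int D(x)\,(p(x)-q(x))\,dx=0$; writing $D(x)-y_{mid}$ in place of $D(x)$ (legitimate since $\int(p-q)=0$) and substituting the sign-split form from Definition~\ref{1.2}, the integrand becomes $(p(x)-q(x))(D(x)-y_{mid})$, which is $>0$ wherever $p(x)>q(x)$ and $>0$ wherever $p(x)<q(x)$ --- here the common-support assumption is what guarantees $d_1,d_2$ are actually evaluated at legitimate arguments and that there is no region where one density is zero and the other positive to spoil the sign analysis --- so the only way the integral can vanish is $p(x)=q(x)$ a.e. The $ELM$ case with $\hat y=y_{mid}$ reduces to $d(\mathbb{E}_{\mathbb{Q}}[D(x)],y_{mid})=0$ (and the analogous term for $\mathbb{P}$, which is automatically relevant only through $D$ being the same function), giving $\mathbb{E}_{\mathbb{Q}}[D(x)]=y_{mid}$ and likewise $\mathbb{E}_{\mathbb{P}}[D(x)]=y_{mid}$, hence $\mathbb{E}_{\mathbb{P}}[D(x)]=\mathbb{E}_{\mathbb{Q}}[D(x)]$, and we conclude as in the $EDM$ case.

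The main obstacle is the $EDM$ (and $ELM$) converse: unlike $DM/LM$, a vanishing \emph{expectation} does not immediately pin down $D$ pointwise, so one genuinely needs the structure of Definition~\ref{1.2} --- that $D-y_{mid}$ has the same sign as $p-q$ and is strictly nonzero off the equality set --- to turn $\int(p-q)(D-y_{mid})=0$ into $p=q$ a.e.; and one must be careful that the equal-support hypothesis is exactly what rules out the pathological case where $p-q$ is supported where $D$'s sign behavior is not controlled. Everything else is bookkeeping with non-negative integrands.
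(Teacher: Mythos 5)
Your proposal follows essentially the same route as the paper's proof: non-negativity from $d\ge 0$, pointwise arguments via positive-definiteness and optimality at equilibrium for $DM$ and $LM$ (including the same step that a constant $D$ on the union of supports must equal $y_{mid}$), and the sign structure of full optimality (Definition~\ref{1.2}) plus equal supports to turn $\mathbb{E}_{\mathbb{P}}[D]=\mathbb{E}_{\mathbb{Q}}[D]$ into $p=q$ for $EDM$ and $ELM$. Your only deviation is cosmetic: centering $D$ at $y_{mid}$ so that $\int (p-q)(D-y_{mid})\,d\mu=0$ with a pointwise nonnegative (strictly positive off $\{p=q\}$) integrand, which is a slightly cleaner rendering of the paper's split-integral contradiction but rests on the same idea.
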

\begin{proof} 
	$ $\newline \newline
	$d$ is positive-definite $\implies d(x,y) \ge 0 \implies$ DM, LM, EDM, ELM are always $\ge 0$.
	\begin{align*}
		\phantom{\iff} &DM(\mathbb{P}, \mathbb{Q}| D) = 0 \\
		\iff &\mathbb{E}_{\substack{x_1 \sim \mathbb{P}  \\ x_2 \sim \mathbb{Q}}}[d(D(x_1), D(x_2))] = 0  \\
		\iff &d(D(x_1), D(x_2)) = 0 \quad \forall x_1 \in \text{supp}(\mathbb{P}), x_2 \in \text{supp}(\mathbb{Q}) \quad &&\text{since } d(D(x_1), D(x_2)) \ge 0 \\
		\iff &D(x_1) = D(x_2) \quad \forall x_1 \in \text{supp}(\mathbb{P}), x_2 \in \text{supp}(\mathbb{Q}) &&\text{since } d \text{ is positive-definite} \\
		\iff &D(x) = y_{mid} \quad \forall x \in \text{supp}(\mathbb{P} \cup \mathbb{Q}) &&\text{otherwise $D$ wouldn't be optimal} \\
		\iff &p(x) = q(x)\quad \forall x \in \text{supp}(\mathbb{P} \cup \mathbb{Q}) &&\text{since $D$ is optimal} \\
		\iff &\mathbb{P} = \mathbb{Q}
	\end{align*}
	Thus, DM is a divergence.
	\begin{align*}
		\phantom{\iff} &LM(\mathbb{P}, \mathbb{Q}| D) = 0 \\
		\iff &\mathbb{E}_{x_1 \sim \mathbb{P}}[d(D(x_1), \hat{y})] = 0 \text{ and } \mathbb{E}_{x_2 \sim \mathbb{Q}}[d(D(x_2),\hat{y})] = 0  \\
		\iff &d(D(x_1), \hat{y}) = 0 \text{ and } d(D(x_2),\hat{y}) = 0 \quad \forall x_1 \in \text{supp}(\mathbb{P}), x_2 \in \text{supp}(\mathbb{Q}) \\
		\iff &D(x_1) = \hat{y} \text{ and } D(x_2) = \hat{y} \quad \forall x_1 \in \text{supp}(\mathbb{P}), x_2 \in \text{supp}(\mathbb{Q}) \\
		\iff &D(x) = y_{mid} \quad \forall x \in \text{supp}(\mathbb{P} \cup \mathbb{Q}) \quad\quad \text{if } \hat{y}\neq y_{mid}, D\text{ cannot be optimal}  \\
		\iff &p(x) = q(x)\quad \forall x \in \text{supp}(\mathbb{P} \cup \mathbb{Q}) \\
		\iff &\mathbb{P} = \mathbb{Q}
	\end{align*}
	Thus, LM is only a divergence when $\hat{y} =  y_{mid}$. \\
	\begin{align*} 
	\phantom{\iff} &EDM(\mathbb{P}, \mathbb{Q}| D) = 0 \\
	\iff &d \left(\mathbb{E}_{x \sim \mathbb{P}}[D(x)], \mathbb{E}_{x \sim \mathbb{Q}}[D(x)] \right) = 0 \\
	\iff &\mathbb{E}_{x \sim \mathbb{P}}[D(x)] = \mathbb{E}_{x \sim \mathbb{Q}}[D(x)] \\
	\iff &\int_{supp(\mathbb{P})} D(x)p(x)\,d\mu(x) = 
	\int_{supp(\mathbb{Q})} D(x) q(x)\,d\mu(x) \\
	\iff &\int_{supp(\mathbb{P} \cup \mathbb{Q})} D(x)p(x)\,d\mu(x) = 
	\int_{supp(\mathbb{P} \cup \mathbb{Q})} D(x) q(x)\,d\mu(x)\\                                                                           
	\iff &\int_{supp(\mathbb{P} \cup \mathbb{Q}))} D(x)(p(x)-q(x))\,d\mu(x) = 0 \\
	& \text{Assuming that }p(x) \neq q(x) \text{ for some $x$, we have that}  \\
	& \int_{\substack{x \text{ s.t. } \\ p(x) > q(x)}} D(x)(p(x)-q(x))\,d\mu(x) + \int_{\substack{x \text{ s.t. } \\ p(x) < q(x)}} D(x)(p(x)-q(x))\,d\mu(x) = 0 \\
	& \int_{\substack{x \text{ s.t. } \\ p(x) > q(x)}} D(x)(p(x)-q(x))\,d\mu(x) = \int_{\substack{x \text{ s.t. } \\ p(x) < q(x)}} D(x)(q(x)-p(x))\,d\mu(x) \\
	& \text{On the left integral, we have that } D(x) > y_{mid} \text{ and } p(x)-q(x) \in (0,1) \\
	& \text{On the right integral, we have that } D(x) < y_{mid} \text{ and } p(x)-q(x) \in (0,1) \\
	& y_{mid} < \int_{\substack{x \text{ s.t. } \\ p(x) > q(x)}} D(x)(p(x)-q(x))\,d\mu(x) = \int_{\substack{x \text{ s.t. } \\ p(x) < q(x)}} D(x)(q(x)-p(x))\,d\mu(x) < y_{mid} \\
	& \text{This is impossible, thus we must have that } p(x)=q(x) \quad \forall x \\
	\iff &p(x) = q(x) \quad \forall x \in supp(\mathbb{P} \cup \mathbb{Q}) \\
	\iff &\mathbb{P} = \mathbb{Q}
	\end{align*}
	Thus EDM is a divergence when the two distributions have the same support.
	\begin{align*} 
	\phantom{\Longrightarrow} &\mathbb{P} = \mathbb{Q} \\ 
	\Longrightarrow &p(x) = q(x)\quad \forall x \in supp(\mathbb{P} \cup \mathbb{Q}) \\
	\Longrightarrow &D(x) = y_{mid} \quad \forall x \in supp(\mathbb{P} \cup \mathbb{Q}) \\
	\Longrightarrow &\mathbb{E}_{x_1 \sim \mathbb{P}}[D(x_1)] = \mathbb{E}_{x_2 \sim \mathbb{Q}}[D(x_2)] = y_{mid} \\
	\Longrightarrow &d(\mathbb{E}_{x_1 \sim \mathbb{P}}[D(x_1)], \hat{y})] = 0 \text{ and } d([\mathbb{E}_{x_2 \sim \mathbb{Q}}D(x_2)],\hat{y}) = 0 \quad\quad \text{only true if } \hat{y}=y_{mid} \\
	\Longrightarrow &ELM(\mathbb{P}, \mathbb{Q}| D) = 0
	\end{align*}
	Thus, just as LM, we need $\hat{y}=y_{mid}$ for ELM to possibly be a divergence.
	\begin{align*} 
	\phantom{\Longrightarrow} &ELM(\mathbb{P}, \mathbb{Q}| D) = 0 \\ \Longrightarrow &\mathbb{E}_{x \sim \mathbb{P}}[D(x)] = \mathbb{E}_{x \sim \mathbb{Q}}[D(x)] = y_{mid} \\
	& \text{(Follow same arguments as proof for EDM)} \\
	\Longrightarrow & \mathbb{P} = \mathbb{Q}
	\end{align*}
	Thus, ELM is a divergence when the two distributions have the same support and $\hat{y}=y_{mid}$.
\end{proof}

\begin{table}
	\caption{Median of the average root mean squared difference between real/fake samples and their nearest fake/real sample (NNRMSE), using 1000 real and fake samples, after 5 runs on the infinite swiss-roll dataset in $\mathbb{R}^2$}
	\label{spiral}
	\centering
	\begin{tabular}{ccccccc}
		\toprule
		 & \multicolumn{6}{c}{GAN}                   \\
		&  \multicolumn{3}{c}{$\mathbb{E}[d(D(x_{f}),\hat{y})]$} & \multicolumn{3}{c}{$d(\mathbb{E}[D(x_f)],\mathbb{E}[\hat{y}])$} \\
		$d(x,y)$ & $D(x_{r})$ & $y_{mid}$ & $y_{real}$ & $D(x_{r})$ & $y_{mid}$ & $y_{real}$ \\
		\cmidrule(r{2pt}){1-1} \cmidrule(l{2pt}r{2pt}){2-4} \cmidrule(l{2pt}){5-7}
		$-\mathbb{E}[log(D(x_f))]$ [Non-saturating] & \multicolumn{6}{c}{.057}     \\
		$\mathbb{E}[log(1-D(x_f))]$ [Saturating] & \multicolumn{6}{c}{.052}     \\
		$|log(x)-log(y)|$ & 1.401 & .282 & .054 & .055 & .062 & .052     \\
		$(log(x)-log(y))^2$ & 1.451 & 1.556 & .058 & 1.511 & .052 & .054      \\
		$|x-y|$ & 1.488 & 1.565 & .050 & .053 & .060 & .050     \\
		$(x-y)^2$ & 1.708 & 1.518 & .056 & .059 & .056 & .051      \\
		$\sqrt{((x-y)^2 + 1)} - 1$ & 1.536 & 1.693 & .050 & .055 & .052 & .050  \\
	\end{tabular}
	\begin{tabular}{ccccccc}
		\toprule
		\phantom{$-\mathbb{E}[log(D(x_f))]$ [Non-saturating]} & \multicolumn{6}{c}{LSGAN}                   \\
		&  \multicolumn{3}{c}{$\mathbb{E}[d(D(x_{f}),\hat{y})]$} & \multicolumn{3}{c}{$d(\mathbb{E}[D(x_f)],\mathbb{E}[\hat{y}])$} \\
		$d(x,y)$ & $D(x_{r})$ & $y_{mid}$ & $y_{real}$ & $D(x_{r})$ & $y_{mid}$ & $y_{real}$ \\
		\cmidrule(r{2pt}){1-1} \cmidrule(l{2pt}r{2pt}){2-4} \cmidrule(l{2pt}){5-7}
		$\mathbb{E}[(D(x_f)-1)^2]$ [LSGAN] & \multicolumn{6}{c}{.052}     \\
		$|x-y|$ & .163 & .164  & .053 & .053 & .057 & .052     \\
		$(x-y)^2$ & .089 & .147 & .052 & .055 & .055 & .054      \\
		$\sqrt{((x-y)^2 + 1)} - 1$ & .088 & .153 & .052 & .069 & .053 & .050  \\
	\end{tabular}
	\begin{tabular}{ccccccc}
		\toprule
		\phantom{$-\mathbb{E}[log(D(x_f))]$ [Non-saturating]} & \multicolumn{6}{c}{WGAN-GP (one-sided penalty)}                   \\
		&  \multicolumn{3}{c}{$\mathbb{E}[d(D(x_{f}),\hat{y})]$} & \multicolumn{3}{c}{$d(\mathbb{E}[D(x_f)],\mathbb{E}[\hat{y}])$} \\
		$d(x,y)$ & $D(x_{r})$ & $y_{mid}$ & $y_{real}$ & $D(x_{r})$ & $y_{mid}$ & $y_{real}$ \\
		\cmidrule(r{2pt}){1-1} \cmidrule(l{2pt}r{2pt}){2-4} \cmidrule(l{2pt}){5-7}
		$-\mathbb{E}[D(x_f)]$  [WGAN-GP] & \multicolumn{6}{c}{.062}     \\
		$|x-y|$ & .291 & & & .062 & & \\
		$(x-y)^2$ & .297 & & & .070 & & \\
		$\sqrt{((x-y)^2 + 1)} - 1$ & .332 & & & .073 & &  \\
		\bottomrule
	\end{tabular}
\end{table}
\begin{table}
	\caption{Inception score (IS) after 25 epochs on the CIFAR-10 dataset}
	\label{CIFAR10}
	\centering
	\begin{tabular}{ccccccc}
		\toprule
		 & \multicolumn{6}{c}{GAN}                   \\
		&  \multicolumn{3}{c}{$\mathbb{E}[d(D(x_{f}),\hat{y})]$} & \multicolumn{3}{c}{$d(\mathbb{E}[D(x_f)],\mathbb{E}[\hat{y}])$} \\
		$d(x,y)$ & $D(x_{r})$ & $y_{mid}$ & $y_{real}$ & $D(x_{r})$ & $y_{mid}$ & $y_{real}$ \\
		\cmidrule(r{2pt}){1-1} \cmidrule(l{2pt}r{2pt}){2-4} \cmidrule(l{2pt}){5-7}
		$-\mathbb{E}[log(D(x_f))]$ [Non-saturating] & \multicolumn{6}{c}{3.36}    \\
		$\mathbb{E}[log(1-D(x_f))]$ [Saturating] & \multicolumn{6}{c}{2.09}     \\
		$|log(x)-log(y)|$ & 3.41 & 3.38 & 3.40 & 3.44 & 3.42 & 3.30 \\
		$(log(x)-log(y))^2$ & 3.43 & 3.36 & 3.27 & 3.32 & 3.50 & 3.43       \\
		$|x-y|$ & 1.03 & 3.36 & 3.38 & 3.38 & 3.26 & 3.38      \\
		$(x-y)^2$ & 3.34 & 3.36 & 3.34 & 3.40 & 3.37 & 3.35      \\
		$\sqrt{((x-y)^2 + 1)} - 1$ & 3.31 & 3.41 & 3.46 & 3.33 & 3.30 & 3.48  \\
	\end{tabular}
	\begin{tabular}{ccccccc}
		\toprule
		\phantom{$-\mathbb{E}[log(D(x_f))]$ [Non-saturating]} & \multicolumn{6}{c}{LSGAN}                   \\
		&  \multicolumn{3}{c}{$\mathbb{E}[d(D(x_{f}),\hat{y})]$} & \multicolumn{3}{c}{$d(\mathbb{E}[D(x_f)],\mathbb{E}[\hat{y}])$} \\
		$d(x,y)$ & $D(x_{r})$ & $y_{mid}$ & $y_{real}$ & $D(x_{r})$ & $y_{mid}$ & $y_{real}$ \\
		\cmidrule(r{2pt}){1-1} \cmidrule(l{2pt}r{2pt}){2-4} \cmidrule(l{2pt}){5-7}
		$\mathbb{E}[(D(x_f)-1)^2]$ [LSGAN] & \multicolumn{6}{c}{3.36}    \\
		$|x-y|$ & 3.28 & 3.42 & 3.27 & 3.28 & 3.30 & 3.33     \\
		$(x-y)^2$ & 3.30 & 3.25 & 3.51 & 3.28 & 3.21 & 3.36      \\
		$\sqrt{((x-y)^2 + 1)} - 1$ & 3.28 & 3.30 & 3.32 & 3.25 & 3.25 & 3.21  \\
	\end{tabular}
	\begin{tabular}{ccccccc}
		\toprule
		\phantom{$-\mathbb{E}[log(D(x_f))]$ [Non-saturating]} & \multicolumn{6}{c}{WGAN-GP (two-sided penalty)}                   \\
		&  \multicolumn{3}{c}{$\mathbb{E}[d(D(x_{f}),\hat{y})]$} & \multicolumn{3}{c}{$d(\mathbb{E}[D(x_f)],\mathbb{E}[\hat{y}])$} \\
		$d(x,y)$ & $D(x_{r})$ & $y_{mid}$ & $y_{real}$ & $D(x_{r})$ & $y_{mid}$ & $y_{real}$ \\
		\cmidrule(r{2pt}){1-1} \cmidrule(l{2pt}r{2pt}){2-4} \cmidrule(l{2pt}){5-7}
		$-\mathbb{E}[D(x_f)]$  [WGAN-GP] & \multicolumn{6}{c}{3.34} \\
		$|x-y|$ & 1.05 & & & 3.34 & & \\
		$(x-y)^2$ & 1.11 & & & 1.48 & & \\
		$\sqrt{((x-y)^2 + 1)} - 1$ & 1.75 & & & 3.26 & &  \\
		\bottomrule
	\end{tabular}
\end{table}

\begin{figure}
	\centering
	\includegraphics[scale=.50]{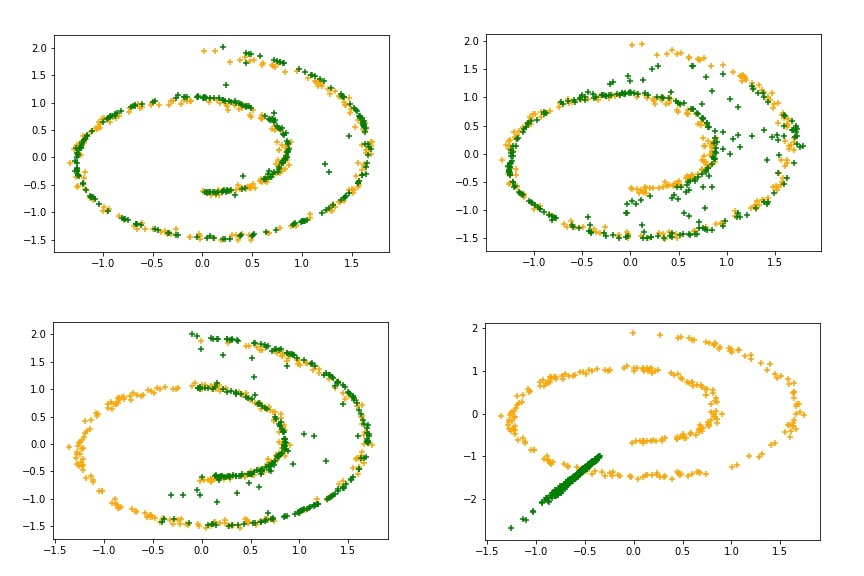}
	\caption{Real data from the infinite swiss roll dataset (\textcolor{myorange}{orange}) and fake data from the generator of a GAN (\textcolor{mygreen}{green}). The four figures show archetypal examples observed in practice with different median NNRMSEs. The \textbf{top left} example has low overall error (NNRMSE $\approx$ .05), the \textbf{top right} example has low fake-to-nearest-real error, but high real-to-nearest-fake error (NNRMSE $\approx$ .20), the \textbf{bottom left} example has low real-to-nearest-fake error, but high fake-to-nearest-real error (NNRMSE $\approx$ .50), and the \textbf{bottom right} example has high overall error (NNRMSE > 1).}
\end{figure}

\end{document}